\documentclass[wcp]{jmlr}

\usepackage{amsmath,amssymb,graphicx,url,doi}
\usepackage{setspace,booktabs,multirow}

\title{Skew-adaptive conformal prediction}

\author{
\Name{Paulo C. Marques} F.
\Email{paulocmf1@insper.edu.br} \\
\addr{Insper Institute of Education and Research \\ São Paulo, SP, 04546-042, Brazil}
\AND
\Name{Helton Graziadei}
\Email{helton@ufscar.br} \\
\addr{Department of Statistics \\ Federal University of São Carlos \\ São Carlos, SP, 13565-905, Brazil}
}

\jmlrproceedings{}{}
\jmlrvolume{}
\jmlrpages{}
\jmlryear{}
\jmlrworkshop{}

\begin{document}

\maketitle

\begin{abstract}
We develop a skew-adaptive extension of split conformal prediction for regression. The method starts from an asymmetric interval family centered at a point prediction and uses the gauge approach to deduce the conformity score induced by this family. The inverse hyperbolic sine transform of signed scaled residuals provides the training target for an additional predictive model, whose role is to learn how predictive uncertainty should tilt across the feature space. The resulting procedure preserves the finite-sample marginal validity of split conformal prediction under exchangeability, while producing intervals that adapt to both local scale and local skewness. We also develop a calibration-sample-based estimator for comparing the expected relative future width of the skew-adaptive and classical scaled-score intervals. Experiments on a variety of datasets indicate gains in prediction interval efficiency over the scaled-score construction and conformalized quantile regression, and show that the proposed estimator closely matches the corresponding average width ratio observed on the test sample.
\end{abstract}

\begin{keywords}
Supervised learning; Regression; Prediction intervals; Split conformal prediction; Local predictive skewness.
\end{keywords}

\section{Introduction}\label{sec:intro}

Contemporary predictive methods, ranging from modern regression procedures to the artifacts of artificial intelligence that now shape scientific, industrial, and social life, increasingly derive their strength not from tightly prescribed modeling specifications, but from the ability to learn complex associations directly from large amounts of data. This shift toward machine-learning-based systems in an age of data abundance has produced remarkable gains in predictive and inferential capabilities, while at the same time making it substantially more challenging to assess the uncertainty surrounding the resulting predictions.

When the predictive mechanism is highly flexible and only weakly tied to a prescribed generative model, uncertainty assessments that rely on strong assumptions about either the data-generating process or the structure of the predictive model become difficult to justify. This creates a demand for methods that can attach theoretically guaranteed measures of uncertainty to predictions while relying on assumptions as weak as possible. Conformal prediction \citep{vovk2005} meets this demand by allowing the construction of prediction sets with finite-sample validity under the sole assumption of data exchangeability, regardless of the predictive model under consideration.

In this work, we focus on regression problems within inductive, or split, conformal prediction. Our goal is to develop a natural extension of the scaled-score method introduced by \cite{papadopoulos2002}, moving beyond prediction intervals that expand symmetrically around a central point prediction. More specifically, we seek a conformal procedure that produces asymmetric prediction intervals around the point prediction, allowing the interval to adapt not only to the local scale of predictive uncertainty, but also to its local asymmetry.

We proceed as follows. In Section \ref{sec:canonical}, we briefly recall the standard description of split conformal prediction. In Section \ref{sec:dual}, we discuss the dual view of conformity scores developed by \cite{gupta2022}, in which a gauge function allows the appropriate conformity score to be deduced from the form of a prescribed family of prediction intervals. In Section \ref{sec:family}, we introduce a fairly general family of asymmetric prediction intervals and derive its associated conformity score through the gauge method. Section \ref{sec:gamma} develops a sequential procedure that naturally extends the scaled-score method of \cite{papadopoulos2002}. After learning a central prediction and a local scale, the inverse-hyperbolic-sine transform of the signed scaled residuals is used as the training target for a third predictive model, whose goal is to learn the local asymmetry of the prediction task. This leads to a formal statement of the resulting skew-adaptive conformal prediction procedure. Section \ref{sec:efficiency} discusses issues of prediction interval efficiency.
Section \ref{sec:experiments} presents empirical results on several datasets. In addition to the classical scaled-score method of \cite{papadopoulos2002}, we compare our proposal with the conformalized quantile regression method of \cite{romano2019}, obtaining favorable results for the skew-adaptive approach. Finally, in Section \ref{sec:conclusion}, we provide pointers to open source code and briefly comment on possible uses of the proposed construction beyond the inductive setting considered here.

\section{Split conformal prediction the canonical way}\label{sec:canonical}

Let $T=\{(X'_1,Y'_1),\dots,(X'_m,Y'_m)\}$ be a training sample of random pairs, with feature vectors $X'_i\in\mathbb{R}^d$ and response variables $Y'_i\in\mathbb{R}$, and introduce an exchangeable sequence of random pairs
\[
  \underbrace{(X_1,Y_1),\dots,(X_n,Y_n)}_{\text{calibration}},\underbrace{(X_{n+1},Y_{n+1}),\dots}_{\text{future}}
\]
in which, similarly, $X_i\in\mathbb{R}^d$ and $Y_i\in\mathbb{R}$. The first $n$ pairs in this exchangeable sequence constitute our calibration sample, followed by the future observable pairs. In practice, the training and calibration samples are obtained by randomly splitting the available data; hence the term split conformal prediction.

From a realization of the training set $T$, we construct a \textit{conformity function} $\rho:\mathbb{R}^d\times\mathbb{R}\to\mathbb{R}$ whose general purpose is to contrast, within the exchangeable data sequence, inferences made from feature vectors with values of the associated response variables. For example, in \cite{papadopoulos2002}, one first learns a regression function $\hat\mu:\mathbb{R}^d\to\mathbb{R}$ from a realization of $T$. Subsequently, a second positive predictive model $\hat\sigma:\mathbb{R}^d\to\mathbb{R}^+$ is trained from a realization of the sample $(X'_1,\Delta_1),\dots,(X'_m,\Delta_m)$, in which $\Delta_i=|Y'_i-\hat{\mu}(X'_i)|$, for $i=1,\dots,m$. The intuition is that $\hat{\sigma}$ learns how large the absolute prediction error of $\hat{\mu}$ is expected to be for a given $x\in\mathbb{R}^d$, thereby measuring local changes in prediction difficulty across the feature space. After the models $\hat{\mu}$ and $\hat{\sigma}$ have been learned, the conformity function is defined as the scaled absolute residual:  $\rho(x,y)=|y-\hat{\mu}(x)|/\hat{\sigma}(x)$.

Let $\lceil t\rceil=\min\{k\in\mathbb{Z}:t\le k\}$ denote the ceiling of $t\in\mathbb{R}$. The following universal property, proved in \cite{papadopoulos2002}, is the distinctive feature of conformal methods.

\begin{proposition}\label{prop:mvp}
Define the conformity scores $R_i=\rho(X_i,Y_i)$, for $i\ge 1$, and choose a target nominal miscoverage level $0<\alpha<1$ such that $\left\lceil (1-\alpha)(n+1)\right\rceil \leq n$. Denote the order statistics of the calibration scores $\{R_1,R_2,\dots,R_n\}$ by $R_{(1)}\leq R_{(2)}\le\dots\le R_{(n)}$. Introducing the notation $\hat{r} = R_{\left(\lceil (1-\alpha)(n+1)\rceil\right)}$, it follows that
\begin{equation}\label{eq:mvp}
  \Pr(Y_{n+1}\in C(X_{n+1})) \ge 1 - \alpha,
\end{equation}
in which  $C(x)=\{y\in\mathbb{R}:\rho(x,y)\le \hat r\}$, with $x\in\mathbb{R}^d$.
\end{proposition}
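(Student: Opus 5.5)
The plan is the standard rank argument. First I reduce the coverage event to an inequality between scores. By definition of $C$, we have $Y_{n+1}\in C(X_{n+1})$ exactly when $R_{n+1}\le \hat r$. So it suffices to show
\[
\Pr\bigl(R_{n+1}\le R_{(k)}\bigr)\ge 1-\alpha, \qquad k=\lceil (1-\alpha)(n+1)\rceil .
\]
The assumption $k\le n$ guarantees that $\hat r$ is a genuine order statistic of the calibration scores rather than $+\infty$.

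Next I transfer exchangeability from the pairs to the scores. The function $\rho$ is built only from the training sample $T$. Conditionally on $T$ (equivalently, treating $T$ as independent of the exchangeable sequence), $\rho$ is a fixed measurable map. Applying it coordinatewise therefore preserves exchangeability, so $(R_1,\dots,R_{n+1})$ is exchangeable. All probabilities below are conditional on $T$; the unconditional bound follows by taking expectations.

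The main step is the rank argument, and ties are the delicate point. Let $R^{*}_{(1)}\le\dots\le R^{*}_{(n+1)}$ be the order statistics of all $n+1$ scores. I first check that $R_{n+1}\le R_{(k)}$ holds if and only if $R_{n+1}\le R^{*}_{(k)}$.
\begin{itemize}
\item Suppose $R_{n+1}\le R_{(k)}$. Inserting $R_{n+1}$ into the calibration list does not change the $k$ smallest values among those $\le R_{(k)}$ in a way that pushes $R^{*}_{(k)}$ below $R_{n+1}$. Concretely, at least $n-k+1$ calibration scores are $\ge R_{(k)}\ge R_{n+1}$, and adding $R_{n+1}$ itself gives $n-k+2$ values $\ge R_{n+1}$ among the $n+1$. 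Hence $R^{*}_{(k)}\ge R_{n+1}$.
\item Conversely, suppose $R_{n+1}>R_{(k)}$. Then at least $k$ calibration scores are strictly below $R_{n+1}$, so $R^{*}_{(k)}<R_{n+1}$.
\end{itemize}

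With the equivalence in hand, define $E_i=\{R_i\le R^{*}_{(k)}\}$ for $i=1,\dots,n+1$. The value $R^{*}_{(k)}$ is a symmetric function of $(R_1,\dots,R_{n+1})$, so exchangeability gives $\Pr(E_i)=\Pr(E_{n+1})$ for every $i$. Deterministically, at least $k$ of the indices satisfy $R_i\le R^{*}_{(k)}$, and this holds with or without ties. Therefore
\[
(n+1)\Pr(E_{n+1})=\mathbb{E}\Bigl[\sum_{i=1}^{n+1}\mathbf 1_{E_i}\Bigr]\ge k\ge (1-\alpha)(n+1).
\]
Dividing by $n+1$ yields \eqref{eq:mvp}.

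I expect the equivalence between $\{R_{n+1}\le R_{(k)}\}$ and $\{R_{n+1}\le R^{*}_{(k)}\}$ under possible ties to be the only place that needs care. The counting bound "at least $k$ indices" is what avoids any assumption that the scores are almost surely distinct.
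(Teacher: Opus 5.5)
Your proof is correct and follows the paper's argument essentially step for step. It uses the same deterministic counting bound $\sum_{i=1}^{n+1} I_{\{R_i\le \tilde R_{(k)}\}}\ge k$ combined with exchangeability, and the same identification of $\{R_{n+1}\le R_{(k)}\}$ with $\{R_{n+1}\le \tilde R_{(k)}\}$ for $k\le n$. Your two-direction check of that identification under ties, and your remark that exchangeability of the scores requires conditioning on (or independence from) the training sample $T$, make explicit what the paper states in a single line.
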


\begin{proof}
The assumption of an exchangeable data sequence implies that the full set of conformity scores $\{R_1,R_2,\dots,R_n,R_{n+1}\}$ is exchangeable. We denote the order statistics of this full set by $\tilde R_{(1)}\leq \tilde R_{(2)}\leq\dots\leq \tilde R_{(n)}\leq \tilde R_{(n+1)}$. Taking into account that we may have ties, it follows from the definition of $\tilde R_{(k)}$ that at least $k$ of the $R_i$'s in the full set are less than or equal to $\tilde R_{(k)}$, for $k=1,\dots,n+1$. Hence, ${\sum_{i=1}^{n+1} I_{\{R_i\leq \tilde R_{(k)}\}}\geq k}$, almost surely. Taking expectations in the last inequality and noting that, by exchangeability, the probability $\Pr(R_i\leq \tilde R_{(k)})$ is the same for ${i=1,\dots,n+1}$, we conclude that ${\Pr(R_{n+1}\leq \tilde R_{(k)})\geq k/(n+1)}$. Furthermore, for $k=1,\dots,n$, notice that $R_{n+1}>R_{(k)}$ if and only if $R_{n+1}>\tilde R_{(k)}$, because $R_{n+1}$ cannot be strictly larger than itself. Hence, ${\Pr(R_{n+1}\leq R_{(k)})=\Pr(R_{n+1}\leq \tilde R_{(k)})}$, for $k=1,\dots,n$. The choice $k=\left\lceil (1-\alpha)(n+1)\right\rceil$ yields $\Pr(R_{n+1}\le R_{\left(\left\lceil (1-\alpha)(n+1)\right\rceil\right)})\ge 1-\alpha$, and the result follows from the remaining definitions in the proposition statement.
\end{proof}

Property (\ref{eq:mvp}) is referred to as the marginal validity of conformal prediction intervals. In split conformal prediction, this fundamental property is accompanied by further analytical information. For a finite batch of future observables, its empirical coverage admits an exact distributional characterization. This characterization yields, in particular, a practical procedure for determining the calibration sample size required in applications. It also allows the marginal validity property (\ref{eq:mvp}) to be viewed as an inequality constraint on the expected empirical coverage of the future batch; see \cite{marques2025a} and references therein for details.

It follows from Proposition \ref{prop:mvp} that the scaled absolute residual conformity function $\rho(x,y)=|y-\hat{\mu}(x)|/\hat{\sigma}(x)$ used in \cite{papadopoulos2002} leads to a conformal prediction interval of the form
\begin{equation}\label{eq:scaledinterval}
  C(x)=\bigl[\hat{\mu}(x)-\hat{r}\,\hat{\sigma}(x),\ \hat{\mu}(x)+\hat{r}\,\hat{\sigma}(x)\bigr].
\end{equation}
In what follows, we will refer to this construction as the scaled-score method. The resulting interval is symmetric around the central prediction $\hat\mu(x)$. Although the model $\hat\sigma$ allows the interval length to vary with the feature vector, the expansion still occurs by the same amount to the left and to the right of $\hat\mu(x)$. The natural question, then, is whether this scale-adaptive construction can be extended so that the two sides of the interval are allowed to adapt separately, producing conformal prediction intervals that remain data-driven and marginally valid, but are no longer constrained to be symmetric around the central prediction $\hat\mu(x)$.

\section{A dual view of conformity scores}\label{sec:dual}

According to \cite{vanvleck1916}, Jacobi is said to have inculcated in his students the \textit{dictum} ``Man muss immer umkehren'', that is, one must always invert, or always seek a converse. Taken as a heuristic principle, this suggests examining a familiar construction and turning it inside out, reformulating the problem in the reverse direction so as to uncover a fertile mathematical structure. In Jacobi's spirit, \cite{gupta2022} developed an inversion leading to a dual formulation of the conformity scores discussed in Section \ref{sec:canonical}, a perspective that will be essential in our development of the skew-adaptive conformal prediction procedure in Sections \ref{sec:family} and \ref{sec:gamma}.

We will use the method of \cite{gupta2022} in the following form. For $x\in\mathbb{R}^d$, consider a parameterized family $\{C_r(x)\}_{r\ge 0}$ of nonempty closed intervals of $\mathbb{R}$, nondecreasing in the sense that $C_r(x)\subseteq C_t(x)$ whenever $r\le t$, and right-continuous in the real parameter $r$, in the sense that
\[
  C_r(x)=\bigcap_{t>r} C_t(x).
\]
We refer to the nonnegative real parameter $r$ indexing $C_r(x)$ as the interval expansion factor.

Define the \textit{gauge function} associated with the family $\{C_r(x)\}_{r\ge 0}$ by
\[
  s(x,y)=\inf\{r\ge 0:y\in C_r(x)\},
\]
for $x\in\mathbb{R}^d$ and $y\in\mathbb{R}$. Intuitively, for any given feature vector $x$, the gauge $s(x,y)$ measures the smallest expansion factor $r$ at which the corresponding interval $C_r(x)$ contains the response value $y$. In other words, $s$ gauges how much the interval family must expand before it first reaches the response. The following simple fact builds a bridge with the canonical formulation of split conformal prediction discussed in Section \ref{sec:canonical}.

\begin{proposition}\label{prop:thebridge}
For every $r\ge 0$ and each $x\in\mathbb{R}^d$, we have $C_r(x)=\{y\in\mathbb{R}:s(x,y)\le r\}$.
\end{proposition}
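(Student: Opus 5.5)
We prove the set equality $C_r(x)=\{y\in\mathbb{R}:s(x,y)\le r\}$ by double inclusion, for fixed $x\in\mathbb{R}^d$ and $r\ge 0$. Only two properties of the family $\{C_r(x)\}_{r\ge 0}$ are needed: monotonicity ($C_r(x)\subseteq C_t(x)$ for $r\le t$) and right-continuity ($C_r(x)=\bigcap_{t>r}C_t(x)$). Closedness of the intervals and nonemptiness play no role here.

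For the inclusion $\subseteq$, take $y\in C_r(x)$. Then $r$ belongs to the set $A_y=\{t\ge 0: y\in C_t(x)\}$. Since $s(x,y)=\inf A_y$, we get $s(x,y)\le r$ immediately.

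For the inclusion $\supseteq$, which is the only step requiring care, take $y$ with $s(x,y)\le r$. Note that $s(x,y)$ is finite, so $A_y$ is nonempty.
\begin{itemize}
\item First we show that $y\in C_t(x)$ for every $t>r$. Since $t>r\ge s(x,y)=\inf A_y$, the definition of infimum provides some $t'\in A_y$ with $t'<t$. Monotonicity then gives $y\in C_{t'}(x)\subseteq C_t(x)$.
\item Hence $y\in\bigcap_{t>r}C_t(x)$, and right-continuity identifies this intersection with $C_r(x)$. So $y\in C_r(x)$.
\end{itemize}

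The argument shows that $A_y$ is an up-closed interval containing its infimum: $A_y=[s(x,y),\infty)$ whenever it is nonempty. The main obstacle is exactly this boundary case $s(x,y)=r$, where the infimum is not assumed to be attained. Right-continuity is the hypothesis that rules out $y$ entering the family only for $t$ strictly greater than $r$.

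If $A_y$ is empty, the convention $\inf\emptyset=+\infty$ gives $s(x,y)=+\infty$. Then $y$ lies in no $C_r(x)$ and in no sublevel set $\{s(x,\cdot)\le r\}$, so both sides of the equality agree in that case as well.
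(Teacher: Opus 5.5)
Your proof is correct and follows the paper's argument essentially step for step: the easy inclusion via $r\in A_y$, and the reverse inclusion by taking from the infimum some $t'<t$ in $A_y$, using monotonicity to get $y\in C_t(x)$ for every $t>r$, and concluding with right-continuity. Your extra remarks, namely the empty case with $\inf\emptyset=+\infty$ and the description $A_y=[s(x,y),\infty)$ when $A_y$ is nonempty, are correct, and they make explicit points the paper leaves implicit.
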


\begin{proof}
Fix $r\ge 0$ and $x\in\mathbb{R}^d$. We first prove that $C_r(x)\subseteq\{y\in\mathbb{R}:s(x,y)\le r\}$. Let $y\in C_r(x)$ and define
$A_{x,y}=\{u\ge 0:y\in C_u(x)\}$, so that $s(x,y)=\inf A_{x,y}$. Since $y\in C_r(x)$, we have $r\in A_{x,y}$. It follows immediately that
$s(x,y)=\inf A_{x,y}\le r$. Hence $y\in\{z\in\mathbb{R}:s(x,z)\le r\}$, and therefore $C_r(x)\subseteq\{y\in\mathbb{R}:s(x,y)\le r\}$. Conversely, suppose that $y\in\mathbb{R}$ satisfies $s(x,y)\le r$. We will show that $y$ belongs to every $C_t(x)$ with $t>r$. To this end, fix an arbitrary $t>r$. Since $s(x,y)\le r<t$, we have $\inf A_{x,y}<t$. By the defining property of the infimum, there must exist some $v\in A_{x,y}$ such that $v<t$. Indeed, if no such $v$ existed, then every $u\in A_{x,y}$ would satisfy $u\ge t$, so that $t$ would be a lower bound for $A_{x,y}$, forcing $\inf A_{x,y}\ge t$, a contradiction. Since $v\in A_{x,y}$, we have $y\in C_v(x)$. Since $v<t$ and the family is nondecreasing, $C_v(x)\subseteq C_t(x)$, and therefore $y\in C_t(x)$. As this argument holds for every $t>r$, we obtain $y\in\bigcap_{t>r}C_t(x)$. By the assumed right-continuity of the family, $\bigcap_{t>r}C_t(x)=C_r(x)$. Hence $y\in C_r(x)$, proving that
$\{y\in\mathbb{R}:s(x,y)\le r\}\subseteq C_r(x)$. The two inclusions give the claim.
\end{proof}

\begin{figure}[t!]
\centering
\includegraphics[width=15cm]{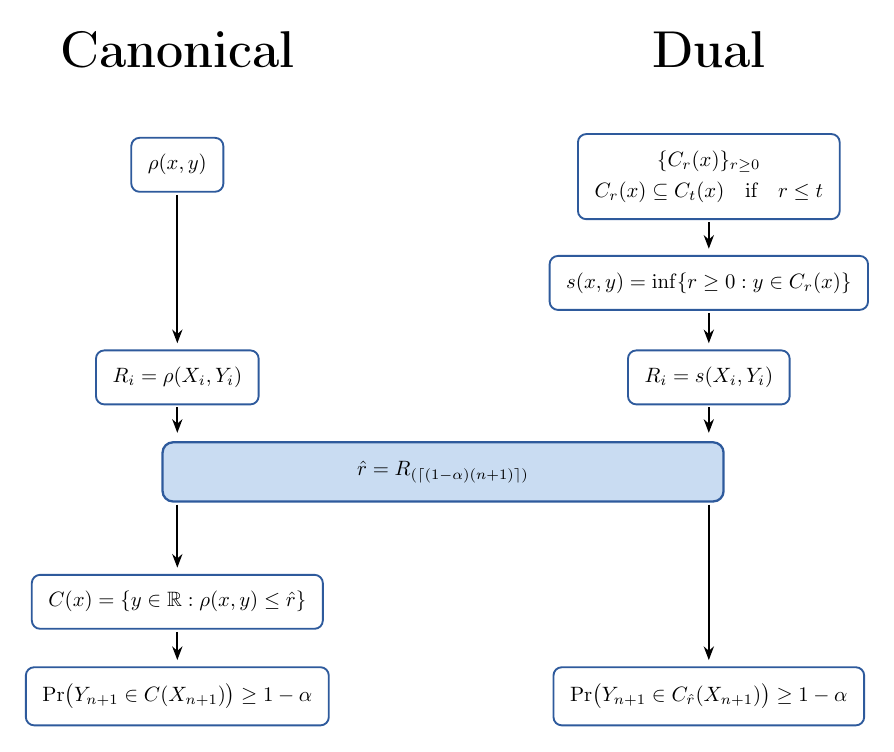} 
\caption{Canonical and dual views of split conformal prediction. The canonical view starts from a specified conformity function. This function is evaluated on each calibration pair to produce conformity scores, and the resulting calibrated threshold defines the prediction interval as a sublevel set of the conformity function. The dual view starts from the opposite end: it first specifies a nondecreasing family of prediction intervals, and then uses its associated gauge as the conformity function to which the split conformal calibration process is applied. The two views therefore share the same finite-sample validity property, but place the design choices at different points in the construction. In the canonical view, the geometry of the prediction set is determined by the chosen conformity function. In the dual view, the conformity score is induced by the prescribed geometry of the interval family.}
\label{fig:canonicalvsdual}
\end{figure}

The following proposition shows that, once the family $\{C_r(x)\}_{r\ge 0}$ has been specified, its gauge function $s$ can be used as an ordinary conformity function. In this way, every interval in the family whose expansion factor is no smaller than a suitable calibrated threshold satisfies a marginal validity property for the future pair $(X_{n+1},Y_{n+1})$. We follow the definitions and notations in Proposition \ref{prop:mvp}.

\begin{proposition}\label{prop:mvpdual}
For a given family $\{C_r(x)\}_{r\ge 0}$ and associated gauge function $s$, define the calibration conformity scores by $R_i=s(X_i,Y_i)$, for $i=1,\dots,n$. It follows that
\[
  \Pr(Y_{n+1}\in C_{\hat r}(X_{n+1}))\ge 1-\alpha,
\]
in which $\hat r=R_{\left(\lceil(1-\alpha)(n+1)\rceil\right)}$.
\end{proposition}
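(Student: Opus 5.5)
The plan is to reduce the statement to Proposition \ref{prop:mvp} by taking the gauge $s$ itself as the conformity function $\rho$, and then to use Proposition \ref{prop:thebridge} to identify the resulting sublevel set with the interval $C_{\hat r}(X_{n+1})$.

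First, set $\rho=s$. This is a legitimate conformity function: it is built from the family $\{C_r(x)\}_{r\ge 0}$, which in turn is determined by the realization of the training sample $T$, so it does not depend on the exchangeable calibration and future pairs. The infimum defining $s(x,y)$ must be finite for the scores to be real-valued. If for some pair $(x,y)$ the set $\{r\ge 0: y\in C_r(x)\}$ is empty, then $s(x,y)=+\infty$. I will allow $\rho$ to take values in $[0,+\infty]$ and observe that the argument of Proposition \ref{prop:mvp} goes through unchanged. That argument uses only order statistics, ties, indicators and exchangeability, and all of these make sense in the extended reals. This bookkeeping about infinite scores is the only point I expect to need care.

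Second, define $R_i=s(X_i,Y_i)$ for all $i\ge 1$. The exchangeability of the data sequence gives exchangeability of $R_1,\dots,R_{n+1}$, exactly as in the proof of Proposition \ref{prop:mvp}. That proposition then yields
\[
\Pr\bigl(s(X_{n+1},Y_{n+1})\le \hat r\bigr)\ge 1-\alpha ,
\]
where $\hat r=R_{(\lceil(1-\alpha)(n+1)\rceil)}$, under the standing assumption $\lceil(1-\alpha)(n+1)\rceil\le n$.

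Third, apply Proposition \ref{prop:thebridge} with $r=\hat r$ and $x=X_{n+1}$. This needs $\hat r$ finite and nonnegative. Nonnegativity is automatic because $s\ge 0$. On the event $\{s(X_{n+1},Y_{n+1})\le\hat r\}$ with $\hat r$ finite, Proposition \ref{prop:thebridge} gives pointwise
\[
\{s(X_{n+1},Y_{n+1})\le \hat r\}=\{Y_{n+1}\in C_{\hat r}(X_{n+1})\}.
\]
The identity holds pathwise for every realization of $\hat r$, so the fact that $\hat r$ is random does no harm. Substituting it into the display from the second step gives the claim.

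If $\hat r=+\infty$, I will adopt the convention $C_\infty(x)=\bigcup_r C_r(x)$, or else note that the paper implicitly assumes finite gauges. In that case the event $\{s\le\hat r\}$ is the whole space, and it contains $\{Y_{n+1}\in C_{\hat r}(X_{n+1})\}$ only under this convention. Resolving the infinite-score case cleanly is the main, if minor, obstacle. The core of the proof is simply the composition of Propositions \ref{prop:mvp} and \ref{prop:thebridge}.
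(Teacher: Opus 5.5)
Your argument is the paper's proof: you take the gauge $s$ as the conformity function, use the closing step of Proposition~\ref{prop:mvp} to get $\Pr(s(X_{n+1},Y_{n+1})\le\hat r)\ge 1-\alpha$, and identify this event pathwise with $\{Y_{n+1}\in C_{\hat r}(X_{n+1})\}$ via Proposition~\ref{prop:thebridge}. Your aside on infinite gauges goes beyond the paper, which implicitly takes scores to be real-valued (conformity functions map into $\mathbb{R}$), and that assumption is the right resolution. The convention $C_\infty(x)=\bigcup_{r}C_r(x)=\{y: s(x,y)<\infty\}$ does not give the needed inclusion $\{s(X_{n+1},Y_{n+1})\le\hat r\}\subseteq\{Y_{n+1}\in C_{\hat r}(X_{n+1})\}$; the inclusion you wrote is the trivial one, in the wrong direction. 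For a counterexample, take $C_r(x)=\{0\}$ for all $r$ and a continuous $Y$: every score is infinite, $\hat r=\infty$, and coverage is $0$. You would need $C_\infty(x)=\mathbb{R}$ instead.
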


\begin{proof}
The closing argument in the proof of Proposition \ref{prop:mvp} shows that \[ \Pr(s(X_{n+1},Y_{n+1})\le \hat r)\ge 1-\alpha. \]
By Proposition \ref{prop:thebridge}, we have that $C_r(x)=\{y\in\mathbb{R}:s(x,y)\le r\}$. Therefore, the event ${\{s(X_{n+1},Y_{n+1})\le \hat r\}}$ coincides with the event $\{Y_{n+1}\in C_{\hat r}(X_{n+1})\}$, and the claim follows.
\end{proof}

Figure \ref{fig:canonicalvsdual} summarizes the preceding discussion, emphasizing the central inversion behind the dual view: in the canonical construction, the prediction set follows from the chosen conformity function, whereas in the dual construction, the conformity score follows from the prescribed geometry of the interval family through its gauge.

\section{A skewed family of prediction intervals}\label{sec:family}

We now introduce the family of prediction intervals that will underlie the skew-adaptive procedure. Let $\hat\mu:\mathbb{R}^d\to\mathbb{R}$ be a central prediction function, and let $\hat a:\mathbb{R}^d\to\mathbb{R}^+$ and $\hat b:\mathbb{R}^d\to\mathbb{R}^+$ be two positive functions. For a feature vector $x\in\mathbb{R}^d$ and each expansion factor $r\ge 0$, consider the rather general interval
\begin{equation}\label{eq:general}
  C_r(x) = \bigl[ \hat\mu(x)-r\,\hat a(x), \hat\mu(x)+r\,\hat b(x) \bigr].
\end{equation}
The function $\hat a$ controls the expansion of the interval to the left of $\hat\mu(x)$, while $\hat b$ controls the expansion to the right. The corresponding interval family $\{C_r(x)\}_{r\ge 0}$ is nondecreasing in $r$, since increasing the expansion factor moves both endpoints outward, and it is right-continuous in $r$ because the endpoints depend continuously on the expansion factor $r$.

The next step is to reparameterize this interval family so that the component controlling scale is separated from the component controlling asymmetry around the central prediction. Consider the bijective transformation $(\hat a(x), \hat b(x)) \mapsto (\hat\sigma(x), \hat\gamma(x))$ defined by
\[
  \hat\sigma(x)=\sqrt{\hat a(x)\hat b(x)}
  \qquad\text{and}\qquad
  \hat\gamma(x)=\frac{1}{2}\log\!\left(\frac{\hat b(x)}{\hat a(x)}\right),
\]
with inverse given by $\hat a(x)=\hat\sigma(x)e^{-\hat\gamma(x)}$ and $\hat b(x)=\hat\sigma(x)e^{\hat\gamma(x)}$. In these new coordinates, interval (\ref{eq:general}) takes the form
\[
  C_r(x)=
  \bigl[
    \hat\mu(x)-r\,\hat\sigma(x)e^{-\hat\gamma(x)},
    \hat\mu(x)+r\,\hat\sigma(x)e^{\hat\gamma(x)}
  \bigr].
\]

Since the transformation is bijective, this reparameterization does not restrict the expressiveness of the original interval family in any way. It merely rewrites the family in coordinates that separate local scale, represented by $\hat\sigma(x)$, from local asymmetry around $\hat\mu(x)$, represented by $\hat\gamma(x)$. A positive value of $\hat\gamma(x)$ expands the interval more to the right of $\hat\mu(x)$ than to the left, while a negative value produces the opposite effect. The gauge function for the corresponding interval family can be derived in analytic form.

\begin{proposition}\label{prop:skewgauge}
For the skewed interval family $\{C_r(x)\}_{r\ge 0}$ defined, for $x\in\mathbb{R}^d$, by
\begin{equation}\label{eq:asymm}
  C_r(x)
  =
  \left[
    \hat\mu(x)-r\,\hat\sigma(x)e^{-\hat\gamma(x)},
    \hat\mu(x)+r\,\hat\sigma(x)e^{\hat\gamma(x)}
  \right],
\end{equation}
in which $\hat\mu(x),\hat\gamma(x)\in\mathbb{R}$ and $\hat\sigma(x)>0$, the gauge function is given by
\[
  s(x,y)
  =
  \max\left\{
    \frac{(\hat\mu(x)-y)_+}{\hat\sigma(x)e^{-\hat\gamma(x)}},
    \frac{(y-\hat\mu(x))_+}{\hat\sigma(x)e^{\hat\gamma(x)}}
  \right\},
\]
in which $u_+=\max\{u,0\}$.
\end{proposition}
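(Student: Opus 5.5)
We fix $x\in\mathbb{R}^d$ and $y\in\mathbb{R}$, abbreviate $\hat a=\hat\sigma(x)e^{-\hat\gamma(x)}>0$ and $\hat b=\hat\sigma(x)e^{\hat\gamma(x)}>0$, and compute the infimum in the definition of the gauge directly. The key observation is that membership $y\in C_r(x)$ is equivalent to the pair of linear inequalities
\[
  \hat\mu(x)-y\le r\,\hat a
  \qquad\text{and}\qquad
  y-\hat\mu(x)\le r\,\hat b .
\]
Since $\hat a,\hat b>0$, these are equivalent to $r\ge(\hat\mu(x)-y)/\hat a$ and $r\ge(y-\hat\mu(x))/\hat b$.

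Adding the constraint $r\ge 0$ from the definition of $s$, the set $A_{x,y}=\{r\ge 0:y\in C_r(x)\}$ becomes the closed half-line $[M,\infty)$, where
\[
  M=\max\left\{0,\ \frac{\hat\mu(x)-y}{\hat a},\ \frac{y-\hat\mu(x)}{\hat b}\right\}.
\]
Its infimum is therefore $M$ itself, so $s(x,y)=M$.

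It remains to rewrite $M$ in the form stated in Proposition \ref{prop:skewgauge}. Since $\hat a,\hat b>0$, we have $\max\{0,u/\hat a\}=u_+/\hat a$ for any real $u$. Hence $\max\{0,p,q\}=\max\{p_+,q_+\}$ with $p=(\hat\mu(x)-y)/\hat a$ and $q=(y-\hat\mu(x))/\hat b$. This gives exactly the displayed expression.

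No step is a real obstacle. The only point needing care is the role of the constraint $r\ge 0$: it is what produces the positive parts. At most one of $\hat\mu(x)-y$ and $y-\hat\mu(x)$ is positive, and when $y=\hat\mu(x)$ both vanish and $s=0$, consistent with $C_0(x)=\{\hat\mu(x)\}$. As a sanity check, the result agrees with Proposition \ref{prop:thebridge}: $C_r(x)=\{y:s(x,y)\le r\}$, since the half-line description of $A_{x,y}$ shows that its infimum is attained.
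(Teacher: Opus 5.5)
Your proof is correct and follows the paper's route: it rewrites membership in $C_r(x)$ as two linear lower bounds on $r$, takes their maximum together with $0$, and converts to positive parts using the positivity of the denominators. Identifying the admissible set explicitly as the closed half-line $[M,\infty)$ makes the attainment of the infimum a little more transparent than the paper's wording, but the argument is the same.
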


\begin{proof}
For any $x\in\mathbb{R}^d$ and $y\in\mathbb{R}$, since $\hat\sigma(x)e^{-\hat\gamma(x)}>0$ and $\hat\sigma(x)e^{\hat\gamma(x)}>0$, using the definitions we have
\[
\begin{aligned}
  s(x,y) &= \inf\{r\ge 0:y\in C_r(x)\} \\
  &= \inf\left\{r\ge 0:\hat\mu(x)-r\,\hat\sigma(x)e^{-\hat\gamma(x)}\le y\le \hat\mu(x)+r\,\hat\sigma(x)e^{\hat\gamma(x)}\right\} \\
  &= \inf\left\{r\ge 0:r\ge\frac{\hat\mu(x)-y}{\hat\sigma(x)e^{-\hat\gamma(x)}}\;\;\text{and}\;\;r\ge\frac{y-\hat\mu(x)}{\hat\sigma(x)e^{\hat\gamma(x)}}\right\}.
\end{aligned}
\]
Thus the admissible values of $r$ are precisely those nonnegative numbers that are at least as large as both lower bounds above. The smallest such $r$ is the maximum of the three lower bounds, namely
\[
  s(x,y)
  =
  \max\left\{
    \frac{\hat\mu(x)-y}{\hat\sigma(x)e^{-\hat\gamma(x)}},
    \frac{y-\hat\mu(x)}{\hat\sigma(x)e^{\hat\gamma(x)}},
    0
  \right\}.
\]
The previous expression can be rewritten with positive parts. Since both denominators are positive,
\[
  \max\left\{
    \frac{\hat\mu(x)-y}{\hat\sigma(x)e^{-\hat\gamma(x)}},
    0
  \right\}
  =
  \frac{(\hat\mu(x)-y)_+}{\hat\sigma(x)e^{-\hat\gamma(x)}}
\quad \text{and} \quad
  \max\left\{
    \frac{y-\hat\mu(x)}{\hat\sigma(x)e^{\hat\gamma(x)}},
    0
  \right\}
  =
  \frac{(y-\hat\mu(x))_+}{\hat\sigma(x)e^{\hat\gamma(x)}}.
\]
Consequently,
\[
  s(x,y)
  =
  \max\left\{
    \frac{(\hat\mu(x)-y)_+}{\hat\sigma(x)e^{-\hat\gamma(x)}},
    \frac{(y-\hat\mu(x))_+}{\hat\sigma(x)e^{\hat\gamma(x)}}
  \right\},
\]
as claimed.
\end{proof}

Proposition \ref{prop:skewgauge} shows how the family of asymmetric intervals (\ref{eq:asymm}) induces a conformity score by the dual mechanism described in Section \ref{sec:dual}. Given a pair $(X_i,Y_i)$ in the calibration sample, the corresponding conformity score is
\[
  R_i =
  \max\left\{
    \frac{(\hat\mu(X_i)-Y_i)_+}
    {\hat\sigma(X_i)e^{-\hat\gamma(X_i)}},
    \frac{(Y_i-\hat\mu(X_i))_+}
    {\hat\sigma(X_i)e^{\hat\gamma(X_i)}}
  \right\},
\]
for $i=1,\dots,n$. Proposition \ref{prop:mvpdual} yields the marginally valid prediction interval
\[
  C_{\hat r}(x)
  =
  \left[
    \hat\mu(x)-\hat r\,\hat\sigma(x)e^{-\hat\gamma(x)},
    \hat\mu(x)+\hat r\,\hat\sigma(x)e^{\hat\gamma(x)}
  \right],
\]
in which $\hat r=R_{\left(\lceil(1-\alpha)(n+1)\rceil\right)}$. The next step is to figure out how to learn the skewness function $\hat\gamma$ from the training data.

\section{Learning the skewness}\label{sec:gamma}

We now describe how the skewness function $\hat\gamma$ is learned from the training sample. The construction proceeds sequentially, in the same style as the scaled-score method of \cite{papadopoulos2002} described in Section \ref{sec:canonical}. First, the central prediction function $\hat\mu$ is learned from the training sample. Second, the absolute residuals $\Delta_i=|Y'_i-\hat\mu(X'_i)|$, for $i=1,\dots,m$, are used as targets for learning a positive predictive model $\hat\sigma:\mathbb{R}^d\to\mathbb{R}^+$. The additional step in the skew-adaptive procedure is to learn a third function $\hat\gamma:\mathbb{R}^d\to\mathbb{R}$, whose role is to determine how, for a given feature vector $x\in\mathbb{R}^d$, the predictive uncertainty should be tilted to the left or to the right of the central prediction.

After $\hat\mu$ and $\hat\sigma$ have been trained, define the signed scaled residuals
\[
  Z_i = \frac{Y'_i-\hat\mu(X'_i)}{\hat\sigma(X'_i)},
\]
for $i=1,\dots,m$. Unlike the absolute residuals $\Delta_i$, the $Z_i$'s retain the information about the direction of the prediction error. Positive values of $Z_i$ indicate that the response value lies to the right of the central prediction, while negative values indicate that it lies to the left.

We are then left with the question of how to construct a natural regression target $\tau_i$ for the skewness model $\hat\gamma$ from the signed scaled residuals $Z_i$. The construction of this target $\tau_i$ is dictated by the form of the interval family in (\ref{eq:asymm}). Since $Z_i$ is a signed scaled residual, it carries exactly the directional information that is missing from the absolute residuals $\Delta_i$ used to train $\hat\sigma$. The skewed interval family suggests that this directional information should be encoded multiplicatively rather than additively. Indeed, since the right and left sides of the skewed interval (\ref{eq:asymm}) are controlled by the reciprocal factors $e^{\hat\gamma(x)}$ and $e^{-\hat\gamma(x)}$, interchanging $Z_i\leftrightarrow-Z_i$ should interchange $e^{\hat\gamma(X'_i)}\leftrightarrow e^{-\hat\gamma(X'_i)}$.

Since the target $\tau_i$ plays, at the level of the individual training residual, the role that $\hat\gamma(X'_i)$ will play after smoothing across the feature space, we arrive at the definition of the target $\tau_i$ by imposing to it exactly the same symmetry property: interchanging $Z_i\leftrightarrow-Z_i$ interchanges $e^{\tau_i}\leftrightarrow e^{-\tau_i}$. The simplest relation with this property is $Z_i=e^{\tau_i}-e^{-\tau_i}$. Therefore, $Z_i=2\sinh(\tau_i)$, and inversion gives the target $\tau_i=\operatorname{arcsinh}(Z_i/2)$. The third predictive model $\hat\gamma$ is then trained on the pairs $\{(X'_i,\tau_i)\}_{i=1}^m$.

The full skew-adaptive conformal prediction procedure is presented in Algorithm \ref{algo:skewadaptive}. When $\hat\gamma\equiv 0$, the calibration scores in Algorithm \ref{algo:skewadaptive} reduce to $r_i=|y_i-\hat\mu(x_i)|/\hat\sigma(x_i)$, and the final interval becomes exactly the scaled-score prediction interval (\ref{eq:scaledinterval}). This fact makes the proposed method a genuine extension of the construction of \cite{papadopoulos2002}: it preserves the same central and scale components, while adding a learned skewness component that allows the two sides of the prediction interval to adapt separately.

\begin{algorithm2e}[t!]
\caption{Skew-adaptive conformal prediction}\label{algo:skewadaptive}
\SetArgSty{textbf}
\SetAlgoVlined
\DontPrintSemicolon
\LinesNumbered
\KwIn{Observed training sample $\{(x'_1,y'_1),\dots,(x'_m,y'_m)\}$ and calibration sample $\{(x_1,y_1),\dots,(x_n,y_n)\}$. Batch of future feature vectors $\{x_{n+1},\dots,x_{n+\ell}\}\subset\mathbb{R}^d$. Nominal miscoverage level $0<\alpha<1$ satisfying $\lceil(1-\alpha)(n+1)\rceil\le n$.}
\KwOut{Skew-adaptive conformal prediction intervals $C^{(1)}(x_{n+1}),\dots,C^{(\ell)}(x_{n+\ell})$.}
\BlankLine
Build $\hat\mu:\mathbb{R}^d\to\mathbb{R}$ from $\{(x'_1,y'_1),\dots,(x'_m,y'_m)\}$\;
\For{$i\leftarrow 1$ \KwTo $m$}{
  $\delta_i\gets |y'_i-\hat\mu(x'_i)|$\;
}
Build $\hat\sigma:\mathbb{R}^d\to\mathbb{R}_+$ from $\{(x'_1,\delta_1),\dots,(x'_m,\delta_m)\}$\;
\For{$i\leftarrow 1$ \KwTo $m$}{
  $z_i\gets (y'_i-\hat\mu(x'_i))/\hat\sigma(x'_i)$\;
  $\tau_i\gets \operatorname{arcsinh}(z_i/2)$\;
}
Build $\hat\gamma:\mathbb{R}^d\to\mathbb{R}$ from $\{(x'_1,\tau_1),\dots,(x'_m,\tau_m)\}$\;
\For{$i\leftarrow 1$ \KwTo $n$}{
  $r_i\gets
  \max\left\{
  \displaystyle\frac{(\hat\mu(x_i)-y_i)_+}{\hat\sigma(x_i)e^{-\hat\gamma(x_i)}},
  \displaystyle\frac{(y_i-\hat\mu(x_i))_+}{\hat\sigma(x_i)e^{\hat\gamma(x_i)}}
  \right\}$\;
}
$\hat r\gets r_{(\lceil(1-\alpha)(n+1)\rceil)}$\;
\For{$j\leftarrow 1$ \KwTo $\ell$}{
  $C^{(j)}(x_{n+j})\gets
  \left[
  \hat\mu(x_{n+j})-\hat r\,\hat\sigma(x_{n+j})e^{-\hat\gamma(x_{n+j})},
  \hat\mu(x_{n+j})+\hat r\,\hat\sigma(x_{n+j})e^{\hat\gamma(x_{n+j})}
  \right]$\;
}
\Return{$\{C^{(1)}(x_{n+1}),\dots,C^{(\ell)}(x_{n+\ell})\}$}
\end{algorithm2e}

\section{Prediction interval efficiency}\label{sec:efficiency}

In this section, we develop tools for comparing the efficiency of the prediction intervals produced by the scaled-score method of \cite{papadopoulos2002} and the skew-adaptive procedure formalized in Algorithm \ref{algo:skewadaptive}. We construct an estimator, based on the calibration sample, for the expected ratio between the widths of the intervals produced by the two methods. The functions $\hat\mu$, $\hat\sigma$, and $\hat\gamma$ have already been constructed from the training data and will be treated as deterministic throughout this section.

For the purpose of comparing the two methods, some notational adjustments relative to the preceding sections will be necessary. Objects associated with the skew-adaptive procedure will be marked with an asterisk. In both methods, the conformal calibration threshold will be denoted with an explicit index indicating the size of the calibration sample, namely
\[
  \hat r_n=R_{(\lceil(1-\alpha)(n+1)\rceil)} \quad \text{and} \quad \hat r^*_n=R^*_{(\lceil(1-\alpha)(n+1)\rceil)}.
\]

We now denote the data sequence by
\[
  (X,Y),(X_1,Y_1),(X_2,Y_2),\dots
\]
in which $(X,Y)$ represents a generic pair outside the calibration sequence $\{(X_i,Y_i)\}_{i\ge 1}$. For the following development, we impose the stronger assumption that the pairs in this sequence are independent and identically distributed (IID), rather than merely exchangeable as in the validity results of the preceding sections.

For $x\in\mathbb{R}^d$, it follows from definitions~(\ref{eq:scaledinterval}) and~(\ref{eq:asymm}) that the ratio between the widths of the intervals produced by the skew-adaptive and scaled-score methods is
\[
  \frac{|C^*_n(x)|}{|C_n(x)|}
  =
  \left(\frac{\hat r^*_n}{\hat r_n}\right)\cosh(\hat\gamma(x)).
\]
Let $H(x)=\cosh(\hat\gamma(x))$, and define the expected width ratio
\[
  \varphi_n=\mathbb{E}\!\left[\frac{|C^*_n(X)|}{|C_n(X)|}\right]=\mathbb{E}\!\left[\left(\frac{\hat r^*_n}{\hat r_n}\right)\right]\times\mathbb{E}[H(X)],
\]
in which the second equality follows from the IID assumption.

\begin{proposition}\label{prop:expectation}
Assume that $\hat r_n \to q$ a.s. and $\hat r_n^* \to q^*$ a.s., in which $q$ and $q^*$ denote the asymptotic calibration thresholds of the scaled-score and skew-adaptive procedures, respectively. If $q>0$, $\hat r_n>0$ a.s., and the sequence $\{\hat r_n^*/\hat r_n\}_{n\ge1}$ is uniformly integrable, then $\mathbb{E}[\hat r_n^*/\hat r_n]\to q^*/q$.
\end{proposition}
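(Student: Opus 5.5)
The plan is to combine almost sure convergence of the ratio with uniform integrability, using the Vitali convergence theorem. In outline: first show $\hat r_n^*/\hat r_n \to q^*/q$ almost surely; then upgrade this to convergence of expectations.

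\emph{Step 1: almost sure convergence of the ratio.} Let $\Omega_0$ be the event on which $\hat r_n\to q$, $\hat r_n^*\to q^*$, and $\hat r_n>0$ for every $n$. This event is a countable intersection of probability-one events, so it has probability one. On $\Omega_0$ every ratio $\hat r_n^*/\hat r_n$ is well defined. Since $q>0$, the map $(u,v)\mapsto v/u$ is continuous at $(q,q^*)$. Hence, on $\Omega_0$, $\hat r_n^*/\hat r_n\to q^*/q$ by continuity. The only care needed here is that the limit $q$ is strictly positive, which is assumed, so the denominator stays bounded away from zero for all large $n$.

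\emph{Step 2: passing to expectations.} Almost sure convergence implies convergence in probability. Together with uniform integrability of $\{\hat r_n^*/\hat r_n\}_{n\ge1}$, Vitali's theorem then gives $L^1$ convergence, and so $\mathbb{E}[\hat r_n^*/\hat r_n]\to q^*/q$. The limit is integrable by Fatou's lemma, since uniform integrability gives $\sup_n\mathbb{E}|\hat r_n^*/\hat r_n|<\infty$.

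If a self-contained argument is preferred, Vitali can be replaced by a truncation argument. For $K>0$, let $\phi_K(t)=\max\{-K,\min\{t,K\}\}$. Bounded convergence gives $\mathbb{E}[\phi_K(\hat r_n^*/\hat r_n)]\to\phi_K(q^*/q)$. The truncation error is bounded by $\sup_n\mathbb{E}\bigl[|\hat r_n^*/\hat r_n|\,I_{\{|\hat r_n^*/\hat r_n|>K\}}\bigr]$, which tends to zero as $K\to\infty$ by uniform integrability. A standard three-$\varepsilon$ argument then finishes the proof.

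\emph{Main obstacle.} None of the steps is hard; the delicate point is Step 1. One must ensure that the ratio is defined (this is where $\hat r_n>0$ a.s. is used) and that the limit has a nonzero denominator (this is where $q>0$ is used). All the real difficulty of the efficiency analysis is pushed into the uniform integrability hypothesis, which the proposition simply assumes rather than proves.
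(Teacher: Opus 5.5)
Your proposal is correct and follows essentially the same route as the paper. First you get almost sure convergence of the ratio from continuity of the quotient at $(q,q^*)$ with $q>0$, then you upgrade to convergence of expectations via uniform integrability; the paper proves this Vitali step inline by splitting $\mathbb{E}[|\hat r_n^*/\hat r_n-q^*/q|]$ over $\{|\hat r_n^*/\hat r_n|\le M\}$ and its complement, which is the same truncation idea as your clipping argument with $\phi_K$.
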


\begin{proof}
Since $\hat r_n\to q$ a.s. with $q>0$ and $\hat r_n^*\to q^*$ a.s., continuity of the quotient map gives $\hat r_n^*/\hat r_n\to q^*/q$ a.s. Write $U_n=\hat r_n^*/\hat r_n$ and $u=q^*/q$. We prove directly that $\mathbb{E}[|U_n-u|]\to0$. Let $\epsilon>0$. By uniform integrability, choose $M>|u|$ such that $\sup_{n\ge1}\mathbb{E}[|U_n|I_{\{|U_n|>M\}}]<\epsilon$. Then
\[
  \mathbb{E}[|U_n-u|]
  \le
  \mathbb{E}[|U_n-u|I_{\{|U_n|\le M\}}]
  +
  \mathbb{E}[|U_n-u|I_{\{|U_n|>M\}}].
\]
The first term converges to zero by dominated convergence, since $U_n\to u$ a.s. and ${|U_n-u|I_{\{|U_n|\le M\}}\le M+|u|}$. For the second term,
\[
  \mathbb{E}[|U_n-u|I_{\{|U_n|>M\}}]
  \le
  \mathbb{E}[|U_n|I_{\{|U_n|>M\}}]
  +
  |u|\,\mathbb{P}(|U_n|>M).
\]
Moreover, $\mathbb{P}(|U_n|>M)\le M^{-1}\mathbb{E}[|U_n|I_{\{|U_n|>M\}}]$, and therefore
\[
  \mathbb{E}[|U_n-u|I_{\{|U_n|>M\}}]
  \le
  \left(1+\frac{|u|}{M}\right)
  \mathbb{E}[|U_n|I_{\{|U_n|>M\}}]
  \le
  \left(1+\frac{|u|}{M}\right)\epsilon.
\]
It follows that $\limsup_{n\to\infty}\mathbb{E}[|U_n-u|]\le(1+|u|/M)\epsilon$. Since $\epsilon$ is arbitrary, $\mathbb{E}[|U_n-u|]\to0$. Finally,
\[
  \left|\mathbb{E}\!\left[\frac{\hat r_n^*}{\hat r_n}\right]-\frac{q^*}{q}\right|
  =
  |\mathbb{E}[U_n]-u|
  \le
  \mathbb{E}[|U_n-u|]
  \to0,
\]
which proves the result.
\end{proof}

In our setting, the uniform integrability of $\{\hat r_n^*/\hat r_n\}_{n\ge1}$ means that the random ratio between the skew-adaptive and scaled-score calibration thresholds cannot have rare but arbitrarily large values that remain relevant in expectation. Almost sure convergence already says that $\hat r_n^*/\hat r_n$ settles around the asymptotic ratio $q^*/q$ for large calibration samples, but this pointwise stabilization does not by itself prevent exceptional calibration samples from producing very large ratios with small probability. Uniform integrability rules out precisely this possibility: it says that the contribution of the tail event $\{\hat r_n^*/\hat r_n>M\}$ to the expected ratio can be made uniformly small, independently of $n$, by choosing $M$ large enough. Thus, in the present comparison, the uniform integrability assumption ensures that the empirical ratio of calibration thresholds is not only converging along almost every realization of the calibration sequence, but is also sufficiently well behaved in the tails for its expectation to converge to the asymptotic ratio $q^*/q$.

\begin{proposition}\label{prop:convergence}
Under the assumptions of Proposition \ref{prop:expectation}, assume that $\mathbb{E}[H(X)]<\infty$, and define the estimator
\[
  \hat\varphi_n
  =
  \left(\frac{\hat r_n^*}{\hat r_n}\right)
  \frac1n\sum_{i=1}^n H(X_i).
\]
Then $\hat\varphi_n-\varphi_n\to0$ a.s.
\end{proposition}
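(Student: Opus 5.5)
Write $\bar H_n=\frac1n\sum_{i=1}^n H(X_i)$ and $U_n=\hat r_n^*/\hat r_n$, so that $\hat\varphi_n=U_n\bar H_n$ and $\varphi_n=\mathbb{E}[U_n]\,\mathbb{E}[H(X)]$. The plan is to show that both products converge almost surely to the same deterministic limit, $(q^*/q)\,\mathbb{E}[H(X)]$. Their difference then tends to zero almost surely.

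For the estimator, I will first note that $H(X_i)=\cosh(\hat\gamma(X_i))\ge 1$ is nonnegative. Moreover, $\hat\gamma$ is deterministic, since it was built from the training data. Hence $H(X_1),H(X_2),\dots$ are IID, because they are measurable functions of the IID pairs $(X_i,Y_i)$. Since $\mathbb{E}[H(X)]<\infty$ and $(X,Y)$ has the same distribution as $(X_1,Y_1)$, the strong law of large numbers gives $\bar H_n\to\mathbb{E}[H(X)]$ a.s. Next, as in the first line of the proof of Proposition \ref{prop:expectation}, the a.s. convergences $\hat r_n\to q>0$ and $\hat r_n^*\to q^*$ give $U_n\to q^*/q$ a.s. by continuity of the quotient. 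Intersecting the two probability-one events, continuity of multiplication yields $\hat\varphi_n\to(q^*/q)\,\mathbb{E}[H(X)]$ a.s.

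For the deterministic sequence $\varphi_n$, Proposition \ref{prop:expectation} applies directly, since its hypotheses are assumed. It gives $\mathbb{E}[U_n]\to q^*/q$, and multiplying by the finite constant $\mathbb{E}[H(X)]$ gives $\varphi_n\to(q^*/q)\,\mathbb{E}[H(X)]$. Combining the two limits,
\[
\hat\varphi_n-\varphi_n=\bigl(\hat\varphi_n-(q^*/q)\,\mathbb{E}[H(X)]\bigr)-\bigl(\varphi_n-(q^*/q)\,\mathbb{E}[H(X)]\bigr)\to 0
\]
almost surely.

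There is no serious obstacle; the argument is bookkeeping. The only step needing care is the strong law. It applies because $H$ depends on $x$ only through the fixed function $\hat\gamma$, and because the IID assumption covers the calibration pairs as well as the generic pair $(X,Y)$. This is what identifies the limit of $\bar H_n$ with the factor $\mathbb{E}[H(X)]$ appearing in $\varphi_n$.
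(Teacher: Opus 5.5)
Your proof is correct and uses the same ingredients as the paper's proof: the strong law for $\bar H_n$, almost sure convergence of $U_n=\hat r_n^*/\hat r_n$ to $q^*/q$, and Proposition~\ref{prop:expectation} for $\mathbb{E}[U_n]$. The only difference is bookkeeping: you compare both $\hat\varphi_n$ and $\varphi_n$ to the common limit $(q^*/q)\,\mathbb{E}[H(X)]$, whereas the paper adds and subtracts $U_n\,\mathbb{E}[H(X)]$ and shows that each of the two resulting terms tends to zero almost surely.
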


\begin{proof}
By adding and subtracting $\left(\hat r_n^*/\hat r_n\right)\mathbb{E}[H(X)]$, we obtain
\[
\begin{aligned}
  \hat\varphi_n-\varphi_n
  &=
  \left(\frac{\hat r_n^*}{\hat r_n}\right)
  \frac1n\sum_{i=1}^n H(X_i)
  -
  \mathbb{E}\!\left[\frac{\hat r_n^*}{\hat r_n}\right]\mathbb{E}[H(X)]
  \\
  &=
  \left(\frac{\hat r_n^*}{\hat r_n}\right)
  \left(
    \frac1n\sum_{i=1}^n H(X_i)-\mathbb{E}[H(X)]
  \right)
  +
  \left(
    \frac{\hat r_n^*}{\hat r_n}
    -
    \mathbb{E}\!\left[\frac{\hat r_n^*}{\hat r_n}\right]
  \right)
  \mathbb{E}[H(X)].
\end{aligned}
\]
The first term converges to zero a.s., since $\hat r_n^*/\hat r_n\to q^*/q$ a.s. and is therefore eventually bounded a.s., while the strong law of large numbers gives $n^{-1}\sum_{i=1}^n H(X_i)\to\mathbb{E}[H(X)]$ a.s. For the second term, Proposition \ref{prop:expectation} gives $\mathbb{E}[\hat r_n^*/\hat r_n]\to q^*/q$, whereas $\hat r_n^*/\hat r_n\to q^*/q$ a.s.; hence $\hat r_n^*/\hat r_n-\mathbb{E}[\hat r_n^*/\hat r_n]\to0$ a.s. Therefore $\hat\varphi_n-\varphi_n\to0$ a.s., and the result follows.
\end{proof}

With this construction, after the training stage has been completed, the estimator $\hat\varphi_n$ makes it possible to assess the relative efficiency of the two methods for future predictions directly from the calibration sample. More precisely, it estimates the expected ratio between the width of the skew-adaptive interval and the width of the scaled-score interval for a new observation drawn from the same distribution.

\section{Experiments}\label{sec:experiments}

We compare the performance of three split conformal prediction procedures: the scaled-score method of \cite{papadopoulos2002}, the skew-adaptive procedure developed in the paper and formalized in Algorithm \ref{algo:skewadaptive}, and the conformalized quantile regression method of \cite{romano2019}. Since all three methods share the same marginal validity, the comparison amounts to a question of prediction interval efficiency, measured by the average interval length on the test set.

We evaluate the three procedures on seven publicly available regression datasets. The selection covers a range of sample sizes and numbers of features, and includes problems for which the conditional distribution of the response variable is known to exhibit asymmetry, such as housing prices and bike rental counts. The \textit{Ames Housing} dataset \citep{decock2011} contains 2,930 observations and records sale prices of residential properties sold in Ames, Iowa, between 2006 and 2010, with eighty predictors. In the version distributed with the companion package for \cite{james2021}, the \textit{Bike Sharing} dataset \citep{fanaeet2014} contains 8,645 observations and records the hourly count of bike rentals from the Capital Bikeshare system in Washington, D.C., between 2011 and 2012, with twelve predictors. Based on the 1990 U.S. Census, the \textit{California Housing} dataset \citep{pace1997} contains 20,433 observations and reports the median house value across census tracts in California, with nine predictors. Laboratory measurements of the compressive strength of high-performance concrete mixtures are provided by the \textit{Concrete Compressive Strength} dataset \citep{yeh1998}, which contains 1,030 observations and eight quantitative inputs. The \textit{Diamonds} dataset \citep{wickham2016} contains 53,940 observations and includes prices and physical attributes of round-cut diamonds, with nine predictors. For energy-demand data, the \textit{Energy Consumption} dataset \citep{sathishkumar2021} contains 35,040 observations and reports the energy consumption of a small-scale steel manufacturing plant (Daewoo Steel Co. Ltd., Gwangyang, South Korea), recorded at 15-minute intervals over the course of one year, with ten predictors. Finally, the \textit{Used Cars} dataset \citep{najib2023} contains 4,009 observations and consists of online used-car listings, with eleven predictors. For convenience, the attributes of the seven datasets are summarized in Table \ref{tab:datasets}.

\begin{table}[t!]
\centering
\caption{Attributes of the seven datasets used in the experiments.}
\label{tab:datasets}
\small
\vspace{6pt}
\begin{tabular}{lrrlc}
\toprule
Dataset & Sample size & Number of features & Response variable & Unit \\
\midrule
Ames Housing        & $2{,}930$  & $80$ & Sale price              & USD            \\
Bike Sharing        & $8{,}645$  & $12$ & Hourly rental count     & ---            \\
California Housing  & $20{,}433$ & $9$  & Median house value      & USD            \\
Concrete            & $1{,}030$  & $8$  & Compressive strength    & MPa            \\
Diamonds            & $53{,}940$ & $9$  & Price                   & USD            \\
Energy              & $35{,}040$ & $10$ & Energy consumption      & kWh    \\
Used Cars           & $4{,}009$  & $11$ & Listed price            & USD            \\
\bottomrule
\end{tabular}
\end{table}

Each dataset is randomly partitioned into a training sample, a calibration sample, and a test sample. Two splitting ratios are considered, depending on the available sample size, and following the discussion about the impact of calibration sample sizes in \cite{marques2025a}. For the larger datasets California Housing, Diamonds, and Energy we adopt an $80\%$--$10\%$--$10\%$ split. For Ames Housing, Bike Sharing, Concrete, and Used Cars, which have smaller or moderate sample sizes, a $50\%$--$25\%$--$25\%$ split is used in order to retain a sufficiently large calibration set for learning the conformal threshold $\hat r$.

For both the scaled-score method and the skew-adaptive procedure, the central prediction $\hat\mu$, the local scale $\hat\sigma$, and the local tilt $\hat\gamma$ are learned with Random Forests \citep{breiman2001}. For conformalized quantile regression, the conditional quantile functions at levels $\alpha/2$ and $1-\alpha/2$ are learned with a Quantile Regression Forest \citep{meinshausen2006}. In all cases, the predictive models are fitted with default hyperparameters, since the goal of these experiments is not to optimize the performance of any individual method, but to compare the three conformal procedures on an equal footing, isolating the effect of the conformal procedure from the effect of hyperparameter tuning.

Table \ref{tab:data-results} reports the empirical coverage and the average prediction interval length on the test set, for the three nominal coverage levels $1-\alpha\in\{0.80, 0.85, 0.90\}$. Empirical coverage values are close to the nominal level for all three methods on all datasets, as expected from the marginal validity of the three methods. Small fluctuations in the empirical coverage below the nominal target can be explained in terms of the distribution of the empirical coverage discussed in \cite{marques2025a}. Since validity is matched across methods, the comparison reduces to prediction interval length, and on this front the skew-adaptive procedure produces, on average, the shortest intervals on every dataset and at every nominal coverage level.

Figure \ref{fig:prediction_intervals} shows, at the nominal coverage level $1-\alpha = 0.90$, the prediction intervals produced by the skew-adaptive procedure and the conformalized quantile regression method in a random sample of $30$ test observations from the Ames Housing, Bike Sharing, and California Housing datasets.

Table \ref{tab:ratio} compares, at nominal coverage level $1-\alpha=0.90$, the efficiency of the skew-adaptive method and the scaled-score method through the calibration-sample estimator $\hat\varphi_n$ introduced in Proposition \ref{prop:convergence} and the corresponding average width ratio observed on the test sample. The ratio is always defined as the width of the skew-adaptive interval divided by the width of the scaled-score interval, both for the calibration-based estimate and for the test-sample average. The agreement is remarkably close across all datasets. In five of the seven cases, the absolute difference is below $10^{-3}$, and the largest discrepancies, observed for the Bike Sharing and Used Cars datasets, remain small in practical terms. These results indicate that, after the training stage has been completed, the calibration sample can be used effectively not only to determine the conformal thresholds underlying the coverage guarantee, but also to estimate the expected relative future width of the intervals produced by the skew-adaptive and scaled-score methods. Since all ratios are below one, the table also reinforces the empirical finding that the skew-adaptive procedure produces shorter intervals than the scaled-score method on the considered datasets.

\begin{table}[t!]
\centering
\caption{Empirical coverage and average prediction interval length on the test set for the seven datasets, across three nominal coverage levels. For each dataset and each coverage level, the shortest average length is highlighted in bold.}
\label{tab:data-results}
\small
\resizebox{\textwidth}{!}{
\begin{tabular}{llcccccc}
\toprule
Dataset & $1-\alpha$
& \multicolumn{3}{c}{Empirical coverage}
& \multicolumn{3}{c}{Average length} \\
\cmidrule(lr){3-5}\cmidrule(lr){6-8}
& & Skew-adaptive & Scaled-score & CQR & Skew-adaptive & Scaled-score & CQR \\
\midrule
\multirow{3}{*}{Ames Housing}
& 90\% & 90.64\% & 90.50\% & 90.07\% & \textbf{60{,}676.05} & 62{,}525.15 & 84{,}720.87 \\
& 85\% & 86.24\% & 86.52\% & 85.11\% & \textbf{52{,}011.22} & 55{,}824.96 & 69{,}733.38 \\
& 80\% & 82.27\% & 81.28\% & 80.43\% & \textbf{45{,}369.79} & 48{,}360.39 & 60{,}602.10 \\
\midrule
\multirow{3}{*}{Bike Sharing}
& 90\% & 90.67\% & 89.55\% & 91.65\% & \textbf{110.54} & 127.10 & 206.40 \\
& 85\% & 84.93\% & 85.63\% & 86.66\% & \textbf{96.21} & 113.06 & 174.01 \\
& 80\% & 79.51\% & 80.08\% & 81.30\% & \textbf{85.06} & 102.20 & 149.00 \\
\midrule
\multirow{3}{*}{California Housing}
& 90\% & 90.47\% & 91.10\% & 89.88\% & \textbf{125{,}558.50} & 130{,}871.30 & 165{,}865.40 \\
& 85\% & 86.33\% & 86.93\% & 84.94\% & \textbf{104{,}747.20} & 111{,}895.60 & 136{,}485.20 \\
& 80\% & 83.49\% & 82.39\% & 80.20\% & \textbf{93{,}677.10} & 97{,}614.60 & 116{,}708.00 \\
\midrule
\multirow{3}{*}{Concrete}
& 90\% & 88.74\% & 88.31\% & 93.51\% & \textbf{15.03} & 16.05 & 28.54 \\
& 85\% & 84.85\% & 83.13\% & 89.18\% & \textbf{12.40} & 13.83 & 23.63 \\
& 80\% & 80.09\% & 77.06\% & 83.98\% & \textbf{10.64} & 12.33 & 20.44 \\
\midrule
\multirow{3}{*}{Diamonds}
& 90\% & 89.46\% & 90.06\% & 90.03\% & \textbf{1{,}114.36} & 1{,}151.91 & 1{,}515.00 \\
& 85\% & 85.00\% & 84.85\% & 85.01\% & \textbf{940.91} & 984.00 & 1{,}279.90 \\
& 80\% & 79.82\% & 79.73\% & 79.82\% & \textbf{818.17} & 864.00 & 1{,}105.00 \\
\midrule
\multirow{3}{*}{Energy}
& 90\% & 89.33\% & 88.73\% & 91.24\% & \textbf{2.27} & 2.81 & 6.62 \\
& 85\% & 84.23\% & 84.83\% & 86.91\% & \textbf{1.93} & 2.50 & 5.61 \\
& 80\% & 79.71\% & 79.91\% & 82.38\% & \textbf{1.69} & 2.19 & 4.86 \\
\midrule
\multirow{3}{*}{Used Cars}
& 90\% & 90.99\% & 90.37\% & 90.89\% & \textbf{52{,}109.99} & 56{,}061.27 & 75{,}702.21 \\
& 85\% & 86.02\% & 87.27\% & 86.02\% & \textbf{44{,}232.12} & 48{,}984.04 & 59{,}198.16 \\
& 80\% & 81.06\% & 80.64\% & 82.19\% & \textbf{38{,}962.15} & 43{,}208.58 & 47{,}699.34 \\
\bottomrule
\end{tabular}
}
\end{table}

\begin{figure}[t!]
\centering
\includegraphics[width=13cm]{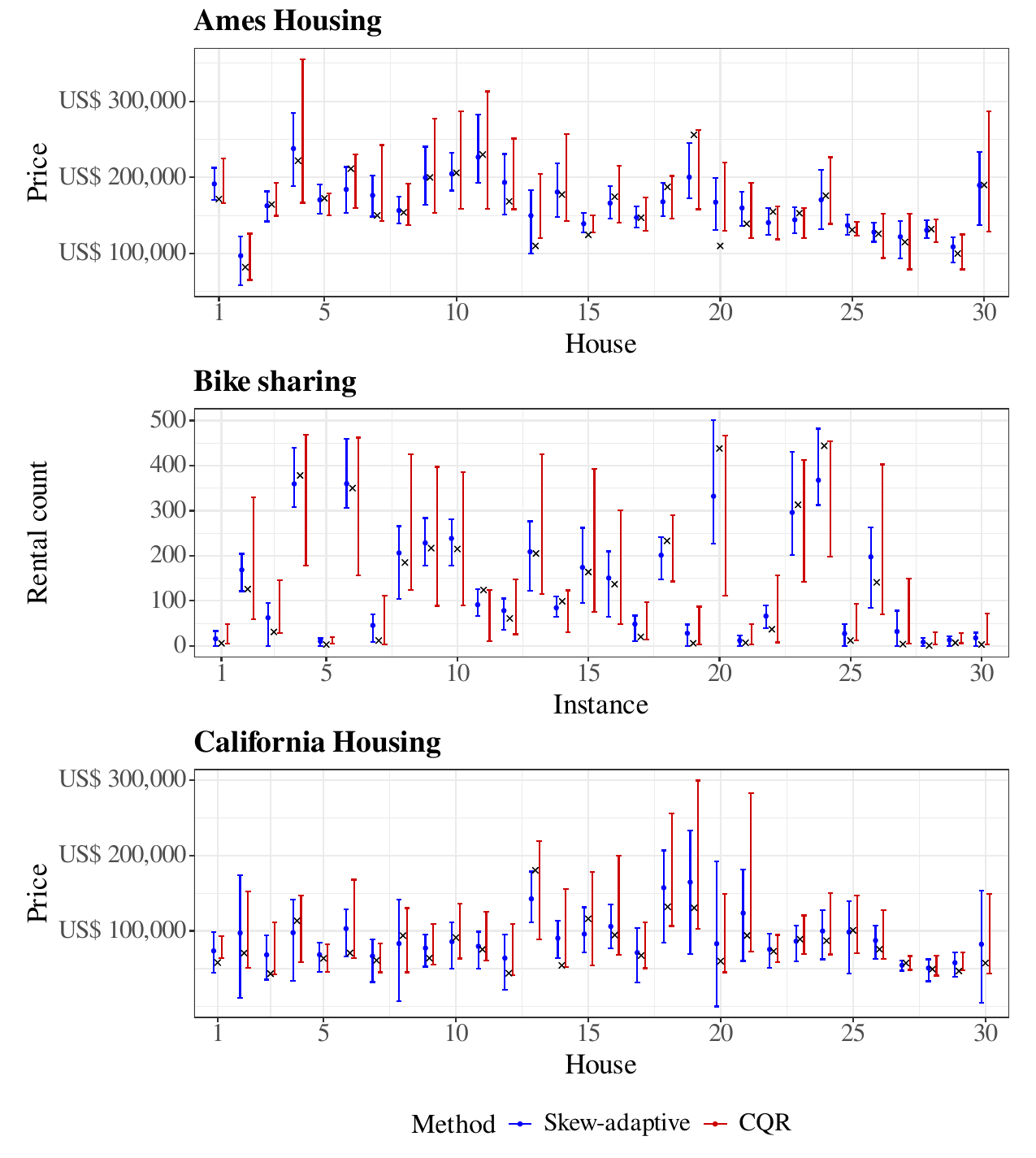}
\caption{Prediction intervals for 30 test sample units from the Ames Housing, California Housing, and Bike Sharing datasets at nominal coverage level $1-\alpha=0.90$. For each test point, the figure displays the skew-adaptive conformal prediction interval in blue, the conformalized quantile regression interval in red, and the observed response, indicated by a black cross ($\times$). The blue dot is the central prediction made by $\hat\mu$ in the skew-adaptive procedure.}
\label{fig:prediction_intervals}
\end{figure}

\begin{table}[t!]
\centering
\caption{Calibration-based estimation of relative prediction-interval efficiency at nominal coverage level $1-\alpha=0.90$. The estimator $\hat\varphi_n$ is compared with the corresponding average test-sample width ratio, defined as the width of the skew-adaptive interval divided by the width of the scaled-score interval. Ratios below one indicate that skew-adaptive intervals are shorter on average.}
\label{tab:ratio}
\small
\vspace{6pt}
\begin{tabular}{lccc}
\toprule
Dataset & Calibration estimate $\hat\varphi_n$ & Test sample average & Absolute difference \\
\midrule
Ames Housing       & 0.9534 & 0.9539 & 0.0005 \\
Bike Sharing       & 0.8660 & 0.8470 & 0.0190 \\
California Housing & 0.9548 & 0.9544 & 0.0003 \\
Concrete           & 0.9322 & 0.9316 & 0.0006 \\
Diamonds           & 0.9692 & 0.9686 & 0.0006 \\
Energy             & 0.8204 & 0.8200 & 0.0004 \\
Used Cars          & 0.9456 & 0.9365 & 0.0092 \\
\bottomrule
\end{tabular}
\end{table}

\section{Concluding remarks}\label{sec:conclusion}

Our development has been confined to the inductive, or ``split'', case of conformal prediction. Nevertheless, the sequential procedure developed here, which introduces the third tilt model $\hat\gamma$, is also suitable in other settings. For example, in applications of the method introduced by \cite{johansson2014}, one usually constructs models analogous to $\hat\mu$ and $\hat\sigma$ from out-of-bag predictions on the training sample, thereby eliminating the need for a separate calibration sample; see, for example, \cite{graziadei2025}. Hence, the same $\hat\mu\text{--}\hat\sigma\text{--}\hat\gamma$ scheme can be used to augment this out-of-bag procedure. An analogous possibility arises for stacked conformal prediction \citep{marques2025b}, another calibration-sample-free method, in which the conformalization of the meta-learner could be enriched by the inclusion of a $\hat\gamma$-like model.

Since the seminal work of \cite{vovk2005}, the development of conformal prediction has aimed at the construction of finite-sample statistical guarantees, a goal that is inherited by the skew-adaptive procedure. On the complementary side of this pursuit of finite-sample properties, Section \ref{sec:efficiency} explores a connection between the usual constructs of conformal prediction and the concepts and methods of classical asymptotic theory. The possibility of empirically comparing the efficiency of different conformal prediction methods by applying asymptotic machinery to the calibration sample opens interesting possibilities and, in our view, is a manifestation of the recurrent fact that in Statistical Theory all its parts insist on being connected. Reference implementations of the skew-adaptive conformal prediction procedure, written in R \citep{R}, are available at:
\begin{center}
\texttt{https://github.com/paulocmarquesf/skew-adaptive\_cp} \quad
\end{center}

\acks{Paulo C. Marques F. receives support from FAPESP (Fundação de Amparo à Pesquisa do Estado de São Paulo) through project 2023/02538-0.}

\bibliography{bibliography.bib}

\end{document}